\documentclass[sigconf]{acmart}
\usepackage{algorithm}
\usepackage{algorithmic}
\usepackage{multirow} 
\usepackage{paralist}
\usepackage{graphicx}
\usepackage{diagbox}
\usepackage{amsthm}

\AtBeginDocument{%
  }

\copyrightyear{2026}
\acmYear{2026}
\setcopyright{cc}
\setcctype{by}
\acmConference[KDD 2026] {Proceedings of the 32nd ACM SIGKDD Conference on Knowledge Discovery and Data Mining V.2}{August 9--13, 2026}{Jeju Island, Republic of Korea.}
\acmBooktitle{Proceedings of the 32nd ACM SIGKDD Conference on Knowledge Discovery and Data Mining V.2 (KDD 2026), August 9--13, 2026, Jeju Island, Republic of Korea}
\acmISBN{979-8-4007-2259-2/2026/08} 
\acmDOI{https://doi.org/10.1145/3770855.3817755}

\begin{document}

\title{Neural Scalable Symbolic Search Framework for \\ Complex Logical Queries with Multiple Free Variables}

\author{Weizhi Fei}
\orcid{0009-0000-1709-1605}
\affiliation{%
  \institution{Department of Mathematical Sciences, Tsinghua University}
  \city{Beijing}
  \state{}
  \country{China}
}
\email{fwz22@mails.tsinghua.edu.cn}

\author{Hang Yin}
\orcid{0009-0001-5264-1630}
\affiliation{
  \institution{Squarepoint Capital}
  \city{Singapore}
  \state{}
  \country{Singapore}
}
\email{yinhwx@gmail.com}

\author{Zihao Wang}
\orcid{0000-0002-3919-0396}
\authornote{Corresponding author.}
\affiliation{
  \institution{Department of Computer Science and Engineering, Hong Kong University of Science and Technology}
  \city{Hong Kong}
  \state{}
  \country{Hong Kong}
}
\email{zwanggc@connect.ust.hk}

\author{Shukai Zhao}
\orcid{0009-0001-4689-4826}
\affiliation{
  \institution{Department of Computer Sciences, University of Rochester}
  \city{New York}
  \state{}
  \country{USA}
}
\email{szhao27@ur.rochester.edu}

\author{Wei Zhang}
\orcid{0009-0004-6255-8373}
\affiliation{
  \institution{Department of Mathematical Sciences, Tsinghua University}
  \city{Beijing}
  \state{}
  \country{China}
}
\email{zhangwei.2020@tsinghua.org.cn}

\author{Yangqiu Song}
\orcid{0000-0002-7818-6090}
\affiliation{%
\institution{Department of Computer Science and Engineering, Hong Kong University of Science and Technology}
  \city{Hong Kong}
  \state{}
  \country{Hong Kong}
}
\email{yqsong@cse.ust.hk}

\begin{abstract}
Complex Query Answering (CQA) is a fundamental knowledge representation and reasoning task over incomplete knowledge graphs (KGs). Answering existential first-order queries with $k$ free variables (i.e., $\text{EFO}_k$ queries) is a crucial yet challenging problem, as it requires ranking answer tuples in $\mathcal{E}^k$, where $\mathcal{E}$ denotes the entity set of a KG. This quickly becomes intractable as $k$ grows. Consequently, existing benchmarks and methods rely on marginal rankings over individual variables; however, marginal rankings are a poor proxy for the true joint ranking of tuples. Building on neural symbolic search for $\text{EFO}_1$ queries, we propose Neural Scalable Symbolic Search (NS3), a budgeted framework that approximates joint ranking without enumerating $\mathcal{E}^k$. NS3 (i) answers marginalized sub-queries to obtain necessary candidate sets, (ii) merges multiple free variables into hypernodes whose domains are pruned and controlled by a dynamic budget $B$, and (iii) progressively reduces an $\text{EFO}_k$ query to an $\text{EFO}_{k-1}$ query over a budgeted reduced domain. Across three standard KG datasets, NS3 substantially improves joint ranking performance while retaining strong marginal accuracy. We further release a joint-ranking benchmark that extends existing $\text{EFO}_1$ datasets to $k=3$, enabling systematic evaluation of multi-variable queries. Our code is provided in \url{https://github.com/HKUST-KnowComp/NS3_KDD2026}.
\end{abstract}

\begin{CCSXML}
<ccs2012>
   <concept>
       <concept_id>10010147.10010257</concept_id>
       <concept_desc>Computing methodologies~Machine learning</concept_desc>
       <concept_significance>500</concept_significance>
       </concept>
   <concept>
       <concept_id>10002951.10003317.10003347</concept_id>
       <concept_desc>Information systems~Retrieval tasks and goals</concept_desc>
       <concept_significance>500</concept_significance>
       </concept>
   <concept>
       <concept_id>10003752.10003790</concept_id>
       <concept_desc>Theory of computation~Logic</concept_desc>
       <concept_significance>500</concept_significance>
       </concept>
 </ccs2012>
\end{CCSXML}

\ccsdesc[500]{Computing methodologies~Machine learning}
\ccsdesc[500]{Information systems~Retrieval tasks and goals}
\ccsdesc[500]{Theory of computation~Logic}


\keywords{complex query answering, neural graph database, knowledge graph}


\maketitle
\newcommand\kddavailabilityurl{https://doi.org/10.5281/zenodo.20355345}
\ifdefempty{\kddavailabilityurl}{}{
\begingroup\small\noindent\raggedright\textbf{Resource Availability:}\\
The source code of this paper has been made publicly available at \url{\kddavailabilityurl}.
\endgroup
}

\section{Introduction}
Knowledge Graphs (KGs) are powerful graph databases with machine-interpretable semantics, which have been widely adopted for many advanced web applications~\citep{rastogi2012building,hirsch2009interactive,remus2017storyfinder,fei2026scaling,bai2026intention,liu2022joint,liu2025hyperkgr,liu2025graph, liu2026morgan}. Despite their wide applications, current large-scale KGs are often incomplete, with substantial knowledge missing from the observed graphs~\citep{safavi2020codex, hu2020open}. Complex Query Answering (CQA) was proposed to predict \textbf{completed answers} to complex logical queries utilizing machine learning models. Traditional graph search methods overlook these potential answers, while large language models fail to provide faithful answers due to hallucinations. To facilitate interpretable reasoning on incomplete structured knowledge, CQA has emerged as a crucial task that simultaneously
addresses knowledge discovery and complex logical reasoning~\citep{ren_beta_2020,yin_rethinking_2023,ren_neural_2023}.

\begin{figure}[t]
\centering
  \includegraphics[width=\columnwidth]{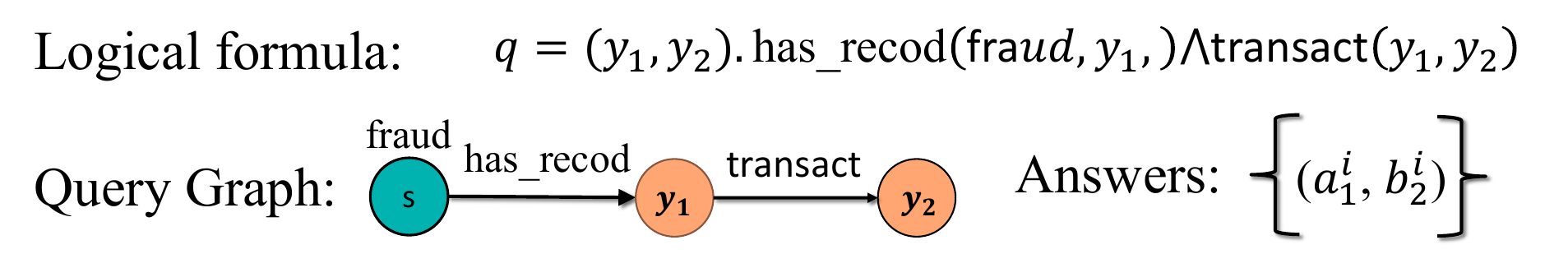}
\vspace{-2em}
  \caption{Visualization of $\text{EFO}_2$ in fraudulent activities. We present its logical formula and query graph. Notably, the answers to this query are tuples.}
  \label{fig:example}
\vspace{-1.5em}
\end{figure}

With significant advancements, the scope of complex logical queries supported by CQA models has continued to expand~\citep{hamilton_embedding_2018, ren_query2box_2020,ren_beta_2020,wang_benchmarking_2021,yin25_efok,bai2025top}. However, most existing studies have primarily focused on Existential First-Order logic queries with a single free variable ($\text{EFO}_1$ queries). The answers to these queries are single entities rather than tuples of entities. In particular, Existential First-Order logical queries with $k$ free variables ($\text{EFO}_k$ queries) not only enhance the expressive power of logical queries but also can effectively facilitate many real-world applications~\citep{ren_neural_2023}. For example, as illustrated in Figure~\ref{fig:example}, $\text{EFO}_k$ queries can be used to identify groups of individuals involved in fraudulent activities, a task that naturally arises in finance, security, and social network analysis.

Answering $\text{EFO}_k$ queries is fundamentally challenging due to the \textbf{exponential} growth of the candidate answer space. With entity set $\mathcal{E}$, a $k$-variable query has candidates in the Cartesian product $\mathcal{E}^k$. Under KG incompleteness, many tuples can be valid answers only after reasoning over missing facts, which makes exhaustive joint inference over $\mathcal{E}^k$ computationally prohibitive for realistic KGs.

Due to this difficulty, the only existing benchmark for multi-variable CQA, $\text{EFO}_k$-CQA~\citep{yin25_efok}, evaluates models primarily via \emph{marginal rankings}, where each free variable is ranked independently over $\mathcal{E}$. However, marginal rankings are a weak proxy for the true \emph{joint ranking} over tuples. For instance, if the true 2-tuple answers are $\{(1,3),(2,4)\}$, the marginal answer sets are $\{1,2\}$ and $\{3,4\}$, which cannot eliminate non-answers such as $(1,2)$ or $(2,3)$. Although $\text{EFO}_k$-CQA also proposes an approximation by sorting tuples from the Cartesian product, its empirical performance remains suboptimal~\citep{yin25_efok}. Consequently, there remains a clear gap between (i) the \emph{correct objective} of joint tuple ranking and (ii) what existing benchmarks and methods can efficiently support.

To bridge this gap, we propose \textbf{Neural Scalable Symbolic Search (NS3)}, a framework that directly infers \emph{joint rankings} for $\text{EFO}_k$ queries by progressively reducing the joint candidate space. NS3 builds on neural-symbolic search methods developed for well-studied $\text{EFO}_1$~\citep{bai_answering_2023,yin_rethinking_2023,Fei2025EfficientAS}, preserving their strengths in interpretability and strong performance while extending them to multi-variable joint inference. Our key idea is to transform an $\text{EFO}_k$ query into an equivalent $\text{EFO}_1$ query over a \emph{budgeted} reduced domain:
(i) a \textbf{marginalization} transformation generates marginal queries over subsets of free variables, whose results provide necessary conditions for pruning candidate tuples;
(ii) a \textbf{merge} transformation packages multiple free variables with pruned domains into a hypernode with a reduced domain, enabling symbolic search to operate on joint candidates;
(iii) a \textbf{budgeting} algorithm dynamically selects and merges variables, progressively reducing an $\text{EFO}_k$ query to an $\text{EFO}_1$ query under a reduced computational budget.

We evaluate NS3 on three standard KGs. On the existing $\text{EFO}_k$-CQA benchmark, we compute marginal rankings by solving the corresponding marginal queries and consistently outperform all baselines across multiple metrics. To directly assess joint inference and scalability beyond $k=2$, we further construct a concise yet scalable $\text{EFO}_k$ dataset tailored for joint ranking evaluation, introducing more challenging multi-variable query structures. Experiments on this dataset demonstrate that NS3 achieves substantial improvements in joint ranking over all baseline methods, highlighting its robustness and scalability.

\section{Related work}\label{app: related work}
Complex query answering extends KG link prediction~\citep{shi2018open,trouillon2016complex,toutanova_representing_2015} to logical queries with conjunction, disjunction, negation, existential quantification, directed graph structures, cycles, and multiple free variables~\citep{hamilton_embedding_2018,ren_query2box_2020,ren_beta_2020,he2025dage,yin_rethinking_2023,yin25_efok}. Existing work also studies CQA over temporal, event, and multi-modal KGs~\citep{fei-etal-2025-extending,lin2023tflex,bai2023complex,kharbanda2024rcone,li2026towards}.

\textbf{Neural query encoders.}
Query embedding methods encode logical queries into latent regions or distributions, including vectors~\citep{hamilton_embedding_2018}, boxes~\citep{ren_query2box_2020}, cones~\citep{zhang_cone_2021,yin_meta_2024}, Beta distributions~\citep{ren_beta_2020}, Gaussian densities~\citep{choudhary_probabilistic_2021}, and other distributions~\citep{yang_gammae_2022,wang-etal-2023-wasserstein}. GNN- and Transformer-based models further encode query graphs or linearized query structures~\citep{wang_logical_2023,zhang_clmpt_2024,bai2023sequential,zheng2025enhancing,liu2025neural}. The surprising theoretical upper bound for the query embedding method was recently revealed in~\citep {wang2026mathbb}, but gaps remain in learnability. Multi-modal distribution methods such as Query2Particles and Query2GMM improve single-variable answer distributions; Appendix~\ref{app:ns3_additional} discusses why they differ from tuple-level $\mathrm{EFO}_k$ joint ranking.

\textbf{Neural-symbolic and LLM-based methods.}
Neural-symbolic methods represent variables as fuzzy sets and apply differentiable logical operations or symbolic search~\citep{chen_fuzzy_2022,zhu_neural-symbolic_2022,xu_neural-symbolic_2022,galkin2024a,arakelyan_complex_2020,arakelyan_adapting_2023,bai_answering_2023,yin_rethinking_2023}. Recent work further accelerates symbolic search by pruning variable domains with neural scores~\citep{Fei2025EfficientAS}. LLM-based CQA methods decompose logical queries and reason over retrieved subgraphs~\citep{xuGenerateonGraphTreatLLM2024,zheng2024clr,choudharyComplexLogicalReasoning2023,xiaImprovingComplexReasoning2025,liu2026mixrag, liu2025unifying, liu2025monte}, but they are mainly designed for decomposable single-free-variable settings. In contrast, NS3 targets joint tuple ranking for $\mathrm{EFO}_k$ queries by combining marginal pruning, hypernode merging, and budgeted symbolic search.

\begin{figure*}[t]
\vspace{-0.5em}

\centering
  \includegraphics[width=\linewidth]{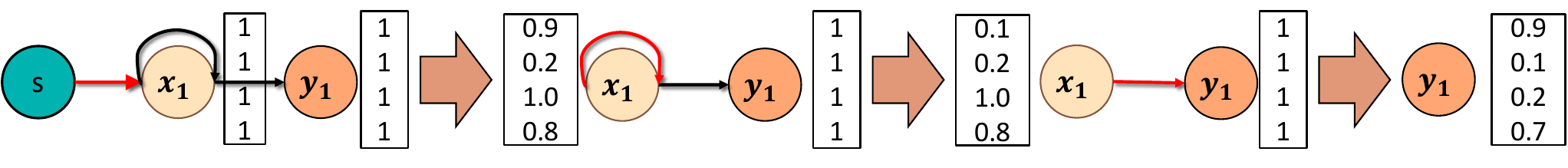}
\vspace{-2em}

  \caption{Visualization of the inferring the $\text{EFO}_1$ query with existing neural symbolic search methods~\citep{yin_rethinking_2023}. These  methods gradually remove the edges connected with constant nodes, self-loop edges, and the edges connected with leaf nodes.  The fuzzy vectors are updated accordingly, and the final fuzzy vector for the free variable can induce the predicted answer set.}
  \label{fig:visualization search}
  \vspace{-1em}

\end{figure*}

\section{Background}

\subsection{Preliminaries in $\text{EFO}_k$ queries}
Let $\mathcal{E}$ and $\mathcal{R}$ be the set of real-world entities and relations, respectively. The knowledge graph is defined  as $\mathcal{KG} \subseteq \mathcal{E} \times \mathcal{R} \times \mathcal{E}$, where each triple $(s, r, o) \in \mathcal{KG}$ indicates that the relation $r$ holds between entities $s$ and $o$. Without loss of generality, we assume the $\text{EFO}_k$ queries are expressed as the Disjunctive Normal Form (DNF) to simplify the discussion.

\begin{definition}[Atomic Formula]
Let variables $x$ and $y$ range over entities $\mathcal{E}$. An atomic formula $r(h,t) \in \{ \texttt{True}, \texttt{False}\}$ is a binary predicate, where $r \in \mathcal{R}$ and $h,t$ are either entities or variables.
\end{definition}

\begin{definition}[$\text{EFO}_k$ query] An $\text{EFO}_k$ query is a first-order logic formula involving logical connectives $\land$, $\lor$, $\lnot$, and existential quantification $\exists$, with $k$ free variables. It is defined as
\begin{align}
\psi(y_1, \cdots, y_k; x_1, \cdots, x_t) = \exists x_1, \cdots, x_t. 
\\ (c^1_1 \land \cdots \land c^1_{n}) \lor \cdots \lor (c^{\kappa}_1 \land \cdots \land c^{\kappa}_{n_{\kappa}}),
\end{align}
where $x$ and $y$ represent existential variables and free variables. The term $c$ denotes the atomic formula $r(h, t)$ or its negation $\neg r(h, t)$.
\end{definition}
\begin{definition}[Answer set of the $\text{EFO}_k$  query]
    Given an $\text{EFO}_k$ query $\psi(y_1,\cdots,y_k)$ and the corresponding knowledge graph $\mathcal{KG}$, its answer set is defined as a set of tuples:
    \begin{align*}
        \mathcal{A}[\psi(y_1,\cdots,y_k)] = \{(a_1^j,\cdots,a_k^j) |\psi(a_1^j,\cdots, a_k^j) \text{ is True in $\mathcal{KG}$}\}.
    \end{align*}
\end{definition}
Because the observed KGs are incomplete, many answers can only be predicted by machine learning methods are important. The KGs are usually split as train/valid/test nested graphs. The \textbf{hard answers} that emerged from the unseen facts in the valid/test KG are particularly important, as they can only be predicted by machine learning models. Previous $\text{EFO}_1$ queries utilize the metrics based on ranking of these \textbf{hard answers} as evaluation protocols, including MRR, and HIT$@10$\citep{ren_beta_2020}. Answering $\text{EFO}_k$ queries requires ranking over Cartesian product of size $|\mathcal{E}|^k$, which quickly becomes computationally infeasible even for moderate $k$.

\subsection{Limitations of the existing CQA methods for $\text{EFO}_k$ queries}
While existing CQA methods are effective  for $\text{EFO}_1$ queries, extending them to $\text{EFO}_k$ queries introduces significant computational challenges.  The joint answer space grows exponentially with the number of free variables $k$, making it infeasible for existing methods to score all candidate tuples. This challenge is particularly pronounced for query embedding methods, which rely on large-scale training over sampled queries.
As a result, standard $\text{EFO}_k$ benchmarks (e.g., $\text{EFO}_k$-CQA) evaluate models using marginal rankings rather than joint rankings.
However, marginal ranking is insufficient for representing joint ranking of multi-variable answers. Although the joint metrics propose one closed solution to estimate joint rankings by marginal rankings, the empirical results are suboptimal.

\subsection{Neural symbolic search methods for the $\text{EFO}_1$ query}
Neural symbolic search methods employ knowledge graph embeddings as neural link predictors to infer unobserved facts. By leveraging fuzzy logic~\citep{mendel1995fuzzy} to relax logical operators, these methods show that answering $\text{EFO}_1$ queries is equivalent to performing a sequence of edge-removal operations on the query graph. Each variable node is associated with a fuzzy vector representing its soft assignments over entities. As edges are progressively removed and logical constraints are propagated, the query graph is iteratively simplified until only the free variable remains. The fuzzy vector of the free variable then induces the predicted answer set. Figure~\ref{fig:visualization search} illustrates this execution process, including the elimination of constant nodes, self-loop edges, and leaf nodes.

\begin{definition}[Product norm]
Fuzzy logic is employed to execute the fuzzy logical operations replacing the logical operations.  We provide the widely used product norm as follows, where Conjunction: $\alpha ~\top_{P}~ \beta = \alpha * \beta$, Negation: $1 - \alpha$, and Disjunction: $\alpha ~\bot_{P}~ \beta =  1 - (1-\alpha) \top_{P} (1-\beta)$.
\end{definition}

\begin{definition}[Query graph]
Let $\phi$ be a conjunctive query, its query graph $G_{\phi} = \{(h_i, r_i, t_i, \textsc{Neg}_i)\}$ consists of quadruples, where each quadruple corresponds to an atomic formula or its negation. This representation corresponds to an edge with two endpoints $h$ and $t$, along with two attributes: $r$, which denotes the relation, and $\textsc{Neg}_i$, a boolean variable indicating whether the atom is positive.
\end{definition}

\begin{definition}[Fuzzy vector]
If we index the entities in $\mathcal{E}=\{e_1,\ldots,e_{|\mathcal{E}|}\}$, we represent a \emph{fuzzy set over entities} for each variable $x$ using a \emph{fuzzy vector} $C_x \in [0,1]^{|\mathcal{E}|}$, where the $i$-th entry $C_x[i]$ denotes the membership degree of entity $e_i$ being a valid assignment for $x$ under current constraints. We define $\mu(s, C_x)$ to retrieve the membership degree of entity $s$ for variable~$x$.
\end{definition}


\begin{figure*}[t]
\centering
  \includegraphics[width=\linewidth]{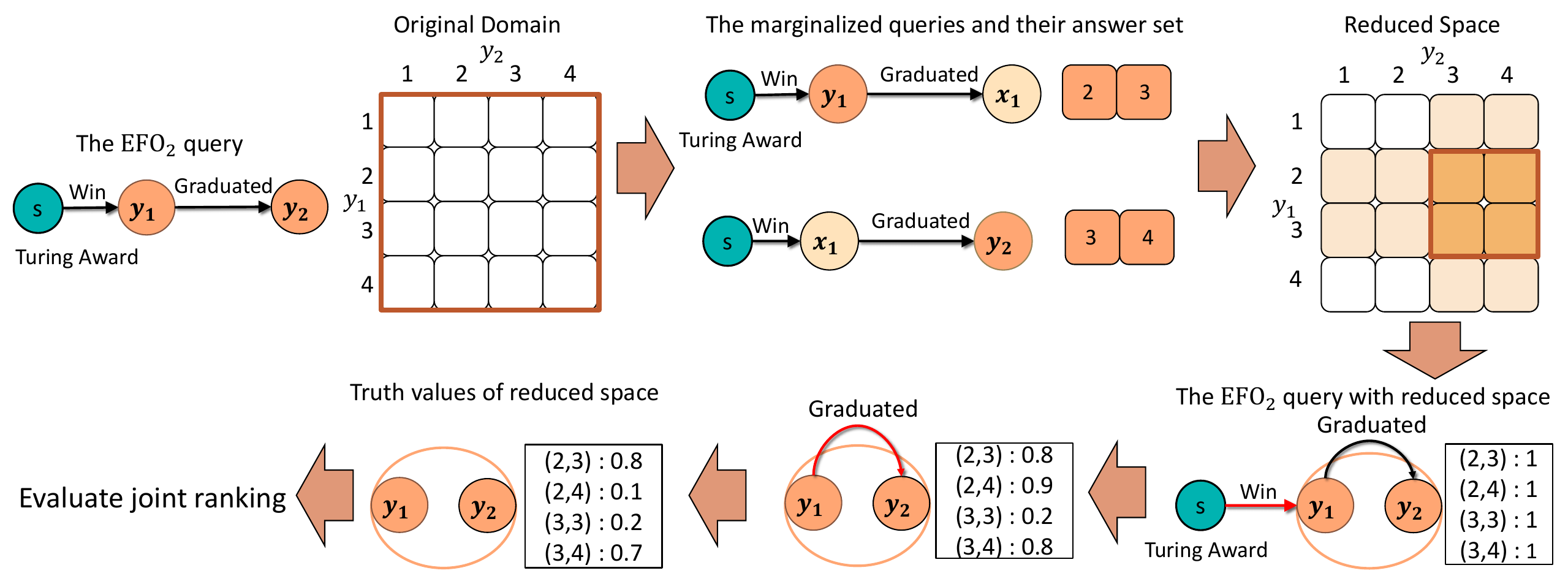}
  \vspace{-0.5em}
  \caption{Visualization to infer the joint ranking of $\text{EFO}_2$ query by the marginalization and merge transformation. First, we infer the marginal answer sets on two marginal queries. Second, we merge the $2$ free variables as one hypernode and reduce the search space by marginal answers. Finally, we adapt the symbolic search method to infer this new query on the reduced domain, thereby resulting in joint ranking.}
  \label{fig:reduces and trans}
\vspace{-1em}
\end{figure*}

\section{Methodology}

In this section, we present \textbf{Neural Scalable Symbolic Search (NS3)}, a framework for answering existential first-order queries with multiple free variables ($\mathrm{EFO}_k$) over incomplete knowledge graphs.
NS3 extends neural symbolic search methods originally developed for $\mathrm{EFO}_1$ queries to the multi-variable setting by explicitly controlling the joint search space.
Rather than enumerating the exponential candidate space $\mathcal{E}^k$, NS3 performs a \emph{budgeted joint search} through two key steps:
(1) identifying necessary candidate sets via marginalization, and
(2) progressively merging free variables into hypernodes with bounded joint domains.
An overview of the framework is illustrated in Figure~\ref{fig:reduces and trans}.
\subsection{Marginalization as a Necessary Condition}

We begin by formalizing marginalization, which provides a necessary (but not sufficient) condition for valid joint answers and serves as the basis for safe candidate pruning.

\paragraph{Marginalization.}
Any valid joint answer tuple to an $\mathrm{EFO}_k$ query must project to a valid answer of each corresponding marginal query obtained by existentially quantifying a subset of free variables. Therefore, candidate assignments that violate any marginal constraint can be safely pruned. However, satisfying all marginal constraints does not guarantee joint validity, which motivates the subsequent joint reasoning steps in NS3.

\begin{definition}[Marginalization of an $\mathrm{EFO}_k$ Query]
Given an $\mathrm{EFO}_k$ query $\phi(\mathcal{Y})$ with free variables
$\mathcal{Y}=\{y_1,\ldots,y_k\}$ and a subset
$\mathcal{Y}^{\delta}\subsetneq\mathcal{Y}$,
the marginalization of $\phi$ over $\mathcal{Y}^{\delta}$ is defined as
\begin{equation}
\mathcal{M}[\phi,\mathcal{Y}^{\delta}] = \phi(\mathcal{Y}^{\delta}),
\end{equation}
where free variables not in $\mathcal{Y}^{\delta}$ are transformed into existentially quantified variables.
\end{definition}

\begin{definition}[Marginalization on Answer Sets]
Let the answer set of an $\mathrm{EFO}_k$ query be
$\mathbf{a}=\{(a_1^j,\ldots,a_k^j)\}$,
and let $\mathcal{Y}^{\gamma}=\{y_{i_1},\ldots,y_{i_\tau}\}$ be a subset of free variables.
The marginalization of $\mathbf{a}$ over $\mathcal{Y}^{\gamma}$ is defined as
\begin{equation}
\mathcal{M}[\mathbf{a},\mathcal{Y}^{\gamma}]
=
\{(a_{i_1}^j,\ldots,a_{i_\tau}^j)\}.
\end{equation}
\end{definition}

A formal proof of the commutativity between marginalization and answer sets is provided in Appendix~\ref{app: proof}. This result ensures that marginal answer sets can be obtained by directly solving marginalized queries using existing symbolic search methods. Unlike independently scoring each free variable, the marginalized query preserves cross-variable constraints by existentially quantifying the other free variables.

\subsection{Merge Transformation}

Marginalization reduces the number of free variables but does not eliminate the need to reason over joint assignments. To enable joint reasoning while reusing $\mathrm{EFO}_1$ symbolic search procedures, we introduce a \emph{merge transformation} that groups multiple free variables into a single hypernode with a reduced joint domain.

\paragraph{Intuition.}
The merge transformation converts a multi-variable query into an $\mathrm{EFO}_1$ query over a reduced joint domain.
This reduction is approximate: it preserves high-confidence joint candidates while discarding low-probability regions of the Cartesian product.

\begin{definition}[Merge Transformation]
Let $G_{\phi}$ be the query graph of an $\mathrm{EFO}_k$ query $\phi$, and let $y_{i_1}$ and $y_{i_2}$ be two free variables.
The merge transformation replaces $y_{i_1}$ and $y_{i_2}$ with a hypernode $\gamma$.
All edges incident to $y_{i_1}$ or $y_{i_2}$ are reconnected to $\gamma$, whose domain represents joint assignments of $(y_{i_1},y_{i_2})$.
\end{definition}

\subsection{Reduced Joint Domain Construction}

We now describe how to construct a reduced joint domain for a merged hypernode. For clarity, we first consider merging two free variables. Let $\mathcal{M}[\phi,\{y_1\}]$ and $\mathcal{M}[\phi,\{y_2\}]$ denote the marginal queries for $y_1$ and $y_2$, producing fuzzy vectors $C_{y_1}$ and $C_{y_2}$.
Let $B$ be the per-variable budget. We allocate a reduced joint domain of size $2B \ll |\mathcal{E}|^2$.

To estimate effective marginal domain sizes, we compute fuzzy counts
\begin{equation}
C_1 = \sum_i C_{y_1}[i], \quad
C_2 = \sum_i C_{y_2}[i].
\end{equation}

\paragraph{Budget Allocation Heuristic.}
We allocate budgets using
\begin{equation}
\lambda = \sqrt{\frac{2B}{C_1 C_2}}, \quad
b_1 = \lambda C_1, \quad
b_2 = \lambda C_2,
\end{equation}
such that $b_1 b_2 \approx 2B$. This heuristic ensures that the overall joint budget is respected while allocating more capacity to variables with diffuse marginal distributions and prioritizing highly constrained variables. We then select the top-$b_1$ entities for $y_1$ and the top-$b_2$ entities for $y_2$, and form the reduced joint domain via their Cartesian product. Padding is applied when necessary to meet the budget constraint. The budget parameter $B$ therefore controls a trade-off between recall and efficiency, which we empirically analyze in Section~\ref{sec:ablation} and Appendix~\ref{app:ns3_additional}.

\subsection{Inferring $\mathrm{EFO}_2$ Queries via Marginalization and Merge}

After merging two free variables into a hypernode, the resulting query can be treated as an $\mathrm{EFO}_1$ query over the reduced joint domain.
Two modifications are required to adapt existing symbolic search procedures. First, edges between the merged variables become self-loop edges on the hypernode.
Second, when removing an edge incident to the hypernode, its fuzzy vector is updated via \emph{constraint slicing}:
each symbolic constraint induces a mask over the joint domain, and the fuzzy vector is updated by element-wise multiplication with this mask, followed by normalization.
This operation propagates symbolic constraints over joint assignments without explicitly enumerating $\mathcal{E}^2$.

\subsection{Progressive Budgeting for $\mathrm{EFO}_k$ Queries}

For general $\mathrm{EFO}_k$ queries, NS3 progressively applies merge transformations until all free variables are consolidated into a single hypernode, yielding an $\mathrm{EFO}_1$ query. If a hypernode $\gamma$ represents $n_{\gamma}$ free variables, its budget is set to
$B_{\gamma} = n_{\gamma} \cdot B$, ensuring linear growth with respect to the number of variables.

\paragraph{Budgeting Algorithm.}
We first compute fuzzy vectors for all single-variable marginal queries. At each step, we greedily merge the pair of variables (or hypernodes) whose estimated marginal domain sizes yield the smallest product. This strategy prioritizes merging the most constrained variables first, thereby limiting the growth of intermediate joint domains. After each merge, we estimate the fuzzy count of the resulting hypernode using symbolic search. The process repeats until a single hypernode remains, at which point joint ranking is computed via $\mathrm{EFO}_1$ symbolic search.

\subsection{Complexity Analysis}

The space complexity of NS3 is dominated by the neural link predictor, requiring $\mathcal{O}((|\mathcal{E}|+|\mathcal{R}|)d)$ memory, where $d$ is the embedding dimension.NS3 avoids storing dense adjacency tensors of size $\mathcal{O}(|\mathcal{E}|^2|\mathcal{R}|)$. For time complexity, let $n$ be the number of atomic formulas and $B$ the per-variable budget.
Computing marginal queries requires $\mathcal{O}(nkB^2)$ time.
At the $i$-th merge step, symbolic inference over the merged hypernode costs $\mathcal{O}(n(iB)^2)$.
Summing over all merge steps yields an overall complexity of $\mathcal{O}(nk^3B^2)$.

While this complexity grows polynomially with $k$ for fixed $B$, NS3 achieves practical scalability by operating with small $k$ and moderate budgets, as validated empirically in Section~\ref{sec: experiment}.

\section{Experiments}\label{sec: experiment}

\begin{table}[t]
    \centering
    \caption{The statistics for the three knowledge graphs are provided, including the number of entities, relations, and edges. Additionally, we present the division of the training, validation, and test graphs.}\label{tab:KGs}
    \resizebox{0.98\linewidth}{!}{
    \begin{tabular}{lccccccc}
        \toprule
        Dataset & Entities & Relations & Training Edges & Validation Edges & Test Edges \\
        \midrule
        FB15k & 14,951 & 1,345 & 483,142 & 50,000 & 59,071  \\
        FB15k-237 & 14,505 & 237 & 272,115 & 17,526 & 20,438 \\
        NELL & 63,361 & 200 & 114,213 & 14,324 & 14,267  \\
        \bottomrule
    \end{tabular}}
\vspace{-1em}

\end{table}

To validate the effectiveness of NS3 in addressing $\text{EFO}_k$ queries, we not only evaluate the existing $\text{EFO}_k$-CQA benchmark involving $2$ free variables, but also propose a new $\text{EFO}_k$ benchmark that is concise and scalable to include more free variables. These benchmarks are constructed over three standard knowledge graphs: FB15k \citep{bordes_translating_2013}, FB15k-237 \citep{toutanova_representing_2015}, and NELL995 \citep{xiong2017deeppath}, with statistics provided in Table~\ref{tab:KGs}.To study the sensitivity and effectiveness of domain budgeting, we vary the budget size and analyze its effects. Additionally, to demonstrate the effectiveness of our proposed budgeting algorithm, we conduct an ablation study comparing our approach to directly merging all free variables at once.

\subsection{Implementation details}
We adapt the neural symbolic search method NLISA~\citep{Fei2025EfficientAS} to implement our proposed NS3 framework. NS3 (J) infers the joint ranking, while NS3 (M) infers the marginal ranking by considering the marginal queries derived from the marginalization process. Since our framework only generates truth values for the reduced domain, the discarded tuples/entities that belong to the answer set are assigned a score of zero. Specifically, we set the budget $B$ to 4,000 for FB15K (14,951 entities), 4,000 for FB15K-237 (14,505 entities), and 6,000 for NELL (60,000 entities). We provide further details about the implementation and the running time of our method in Appendix~\ref{app: imple}.
\begin{table*}[ht]
\centering
\caption{HIT@10 scores(\%) of three metrics for answering queries with $2$ free variables on FB15k-237. $e$ is the number of existential variables.  We use SDAG, Multi, and Cyclic to denote the simple directed acyclic graph, multigraph, and cyclic graph, which represents the topology of query graph. NS3 (M) represents the results of marginal ranking in our framework.}
\label{tab: EFO2 result}
\footnotesize
\begin{tabular}{ccccccccccrc}
\toprule
\multirow{2}{*}{Model}  & \multirow{2}{*}{\shortstack[c]{\\ Type}}  & \multicolumn{2}{c}{$e=0$} & \multicolumn{3}{c}{$e=1$} & \multicolumn{3}{c}{$e=2$} & \multirow{2}{*}{Avg} \\
\cmidrule(lr){3-4} \cmidrule(lr){5-7} \cmidrule(lr){8-10} & & S      & M &  S  & M & C &  S  & M & C & \\
\midrule
\multirow{3}{*}{BetaE} 
& Marg&36.8&37.5&34.3&35.3&49.7&27.1&25.8&37.7&33.0 \\
& Mult&28.9&25.3&24.2&22.5&32.6&23.9&22.0&26.4&24.1\\
& Joint&5.9&5.7&5.1&4.9&12.3&2.2&2.1&6.6&4.6\\
\midrule
\multirow{3}{*}{LogicE} 
& Marg&39.7&38.8&36.2&36.5&51.1&27.8&26.2&38.3&34.0 \\
& Mult&31.9&27.3&26.4&24.3&34.5&25.1&23.2&26.5&25.4\\
& Joint&6.4&6.3&5.7&5.4&13.6&2.6&2.3&7.0&5.0 \\
\midrule
\multirow{3}{*}{ConE} 
& Marg&40.1&40.8&37.6&38.1&55.0&28.8&27.5&41.4&35.8 \\
& Mult&32.8&28.9&27.7&25.3&37.9&26.0&23.9&30.4&27.0\\
& Joint&6.7&7.2&6.0&5.8&14.2&2.7&2.4&7.6&5.4 \\
\midrule
\multirow{3}{*}{CQD} 
& Marg&36.7&34.3&35.6&36.4&49.6&25.5&24.0&39.3&33.0 \\
& Mult  &34.6&28.7&29.8&26.9&36.8&28.8&25.0&28.8&27.9\\
& Joint   &6.4&6.9&6.3&6.4&13.9&3.0&2.7&7.6&5.6  \\
\midrule
\multirow{3}{*}{lmpnn} 
& Marg&40.9&40.0&38.5&38.4&52.7&30.3&27.0&39.0&35.4 \\
& Mult  &34.6&28.7&29.8&26.9&36.8&28.8&25.0&28.8&27.9\\
& Joint   &6.8&7.5&5.9&5.6&13.1&2.6&2.3&6.4&5.0  \\
\midrule
\multirow{3}{*}{FIT} 
& Marg&45.9&43.1&45.0&45.4&50.2&36.4&34.8&42.0&41.2 \\
& Mult  &40.7&34.3&37.2&34.4&33.7&36.2&34.7&33.7&\textbf{35.0}\\
& Joint    &6.7&7.0&7.1&7.2&11.7&3.2&3.4&7.1&5.9   \\
\midrule
\multirow{3}{*}{NS3M} 
& Marg&48.0&46.7&46.0&46.1&59.0&35.5&33.2&48.7&\textbf{42.8} \\
& Mult  &40.1&33.9&36.9&33.3&40.2&34.6&31.1&39.1&34.8\\
& Joint    &6.9&7.6&7.4&7.7&14.6&3.3&3.3&9.0&\textbf{6.6}   \\
\bottomrule
\end{tabular}
\end{table*}
\subsection{Evaluation}

\begin{definition}[Joint ranking]
    Given a query $\phi(y_1, \cdots, y_k)$, the space of candidate tuples is $\mathcal{E}^k$. The joint ranking provides the rank for each tuple in relation to all the candidates.
\end{definition}

\begin{definition}[Marginal ranking]
    Given a query $\phi(y_1, \cdots, y_k)$, the space of candidates for $i$-th free variable is $\mathcal{E}$. The marginal ranking provides the rank for each candidate entity.
\end{definition}

Based on marginal ranking, the $\text{EFO}_k$-CQA benchmark~\citep{yin25_efok} introduces three families of evaluation metrics: Marginal, Multiply, and Joint. These metrics aim to approximate the joint ranking of answers through marginal rankings, with increasing levels of difficulty. Among them, the Joint is the most challenging, as it estimates the joint ranking via combinatorial aggregation. Details of the evaluation procedure are provided in Appendix~\ref{app: eval}. Since existing methods typically achieve low rankings on $\text{EFO}_k$ queries~\citep{yin25_efok}, we report HIT@10.

\subsection{Baselines}
We consider six representative CQA models as baselines\footnote{We exclude NLISA~\citep{Fei2025EfficientAS} from the baselines because it usually has weaker performance compared with FIT.} to compare. For query embedding methods, we include BetaE~\citep{ren_beta_2020}, LogicE~\citep{luus_logic_2021}, and ConE~\citep{zhang_cone_2021}. For symbolic search methods, we select CQD~\citep{arakelyan_complex_2020} and FIT~\citep{yin_rethinking_2023}. Additionally, we include the graph neural network method LMPNN~\citep{wang_logical_2023}. Notably, FIT is a previous state-of-the-art model across various CQA benchmarks.  We follow the approach of \citet{yin25_efok} to adapt the implementation of existing CQA models, enabling them to marginally infer $\text{EFO}_k$ queries; full details are provided in Appendix~\ref{app: baselines}. We exclude LLM-based methods mainly for structural reasons: existing LLM-based CQA methods typically rely on tree decomposition for DAG-structured $\mathrm{EFO}_1$ queries, while general $\mathrm{EFO}_k$ queries require tuple-level reasoning over coupled free variables.

\subsection{$\text{EFO}_k$-CQA, the existing $\text{EFO}_k$ queries benchmark}\label{sec: efok cqa}

We first consider the existing benchmark for evaluating logical queries involving multiple free variables,  $\text{EFO}_k$-CQA benchmark~\citep{yin25_efok}. The $\text{EFO}_k$-CQA benchmark includes a total of 741 different query types, with a maximum of $2$ free variables. This benchmark is developed by enumerating valid query types and aims to provide a comprehensive evaluation, encompassing three topological structures: Simple Directed Acyclic Graphs (SDAG), multi-edge graphs, and cyclic graphs. The combinatorial space of query types is parameterized by the number of constants, existential variables, free variables, and the number of edges. We utilize three kinds of metrics to evaluate the marginal ranking, where the details of Marginal, Multiply, and Joint metrics are referred to Appendix~\ref{app: eval}. Due to the page limit, we also follow the $\text{EFO}_k$-CQA benchmarks to only conduct the experiments on FB15k-237~\citep{toutanova_observed_2015} and we only evaluate the queries involving two variables in the $\text{EFO}_k$-CQA benchmark, shown in Table~\ref{tab: EFO2 result}. The additional experimental results can be referred to the Appendix~\ref{app: further exper}.

Based on the results in Table~\ref{tab: EFO2 result}, we have the following empirical observation:
\begin{enumerate}
    \item \textbf{NS3 (M) surpasses all the baseline models on the average results over two kinds of metrics, including marginal and joint metrics.}, which validate the effectiveness of using the symbolic search method to infer marginal queries for free variables separately. Additionally, our method performs exceptionally well on the challenging cyclic queries. This demonstrates the effectiveness of the marginalization transformation, which is capable of estimating the marginal answer set. The reason NS3 (M) surpasses FIT is that FIT does not account for the propagation of information between free variables, whereas our marginal operation addresses this gap.
    \item \textbf{The joint metrics are significantly lower than the results of the marginal and multiplicative metrics across all the models and queries.} This phenomenon suggests that the gap in using marginal ranking to estimate joint ranking is amplified, leading to increased errors in marginal ranking. This highlights the limitations of marginal ranking and underscores the necessity of directly solving joint ranking.
\end{enumerate}

\subsection{More advanced $\text{EFO}_k$ benchmark}

\begin{figure*}[ht]

\centering
  \includegraphics[width=\linewidth]{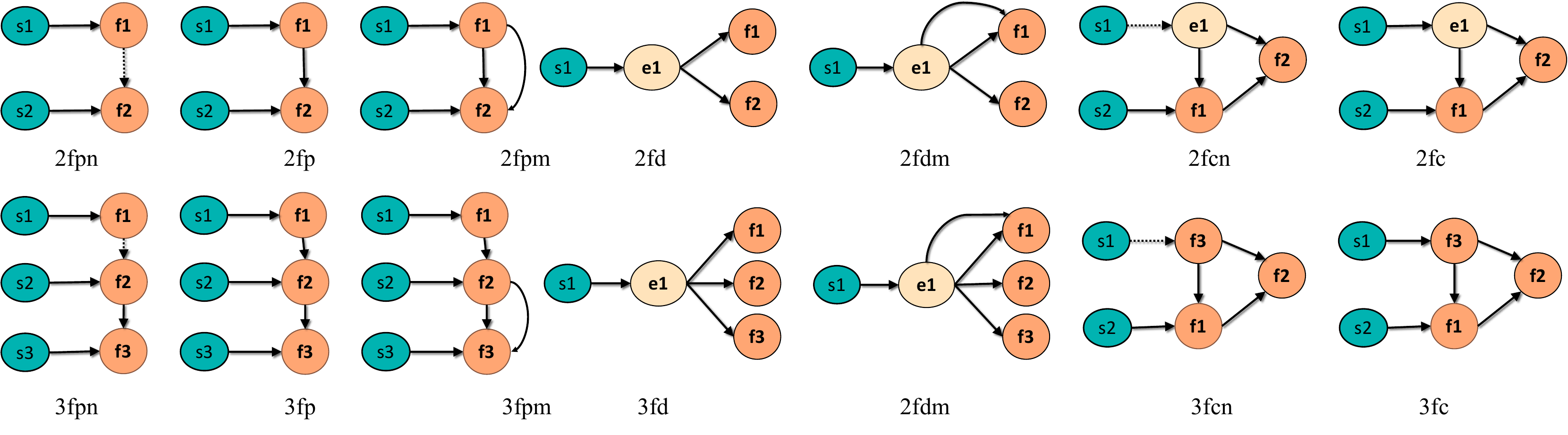}
\vspace{-2em}

  \caption{Visualization of the query types in our scalable $\text{EFO}_k$ dataset by query graph. We use ``s'', ``e'',  and ``f'' to represent the constant variable, existential variable, and free variable, respectively. The black edges represent positive atomic formulas, while the red edges indicate atomic formulas with negation. To distinguish the previously queries, we added "f" to their names.}
  \label{fig:query types}
\vspace{-1em}
\end{figure*}

\begin{table*}[ht]
\centering
\caption{HIT@10 scores(\%) for answering queries with multiple free variables across three standard KGs. Avg($k=2$) and Avg($k=3$) represent the average scores for queries involving $k=2$ and $k=3$ free variables, respectively. The best results are highlighted by \textbf{bold}, while the second-best results are indicated with \underline{underline}. }
\label{tab: EFO3 result}
\resizebox{\textwidth}{!}{

    \begin{tabular}{ccccccccccccccccccc}
    \toprule
    KGs &models & 2fpn & 2fp & 2fpm & 2fd & 2fdm  & 2fcn & 2fc & 2fpn & 3fp & 3fpm & 3fd & 3fdm & 3fcn  & 3fc & Avg.($2$)  & Avg.($3$)  & Avg
    \\ \hline
        \multirow{8}{*}{FB15k-237}&BetaE & 4.1  & 9.0  & 6.7  & 3.7  & 2.2  & 17.4  & 3.9  & 1.9  & 4.0  & 4.4  & 2.4  & 2.3  & 1.6  & 0.3  & 6.7  & 2.4  & 4.6   \\ 
        &LogicE & 3.3  & 9.5  & 7.0  & 5.8  & 2.8  & 18.0  & 3.7  & 1.9  & 5.2  & 5.5  & 2.6  & 2.3  & 1.2  & 0.5  & 7.2  & 2.7  & 5.0   \\ 
        &ConE & 4.4  & 9.3  & 7.6  & 5.3  & 2.8  & 17.7  & 9.2  & 2.6  & 5.4  & 5.4  & 2.7  & 2.7  & 0.5  & 0.3  & 8.0  & 2.8  & 5.4   \\ 
        &LMPNN & 2.6  & 9.7  & 8.1  & 4.0  & 1.9  & 15.3  & 6.9  & 1.7  & 6.0  & 7.3  & 2.2  & 2.5  & 7.3  & 1.9  & 6.9  & 4.1  & 5.5   \\ 
        &CQD & 1.8  & 9.5  & 9.2  & 4.1  & 2.2  & 16.1  & 4.0  & 0.7  & 8.3  & 8.6  & 2.2  & 2.3  & \underline{9.9}  & 0.0  & 6.7  & 4.6  & 5.6   \\ 
        &FIT & 3.3  & 9.7  & 7.7  & 5.7  & 3.2  & \underline{14.9}  & \underline{10.1}  & 2.9  & 8.5  & 6.7  & 3.1  & 4.2  & 5.9  & {3.5}  & {7.8}  & {5.0}  & {6.4}   \\ 
        &NS3 (M) & \underline{4.5}  & \underline{14.4}  & \underline{14.1}  & \underline{6.0}  & \underline{3.9}  & 13.7  & 6.0  & \underline{5.2}  & \underline{14.9}  & \underline{17.9}  & \underline{3.7}  & \underline{5.2}  & \underline{7.3}  & \underline{3.1}  & \underline{8.9}  & \underline{8.2}  & \underline{8.6}   \\ 
        &NS3 (J) & \textbf{11.4}  & \textbf{28.5}  & \textbf{28.3}  & \textbf{16.2}  & \textbf{10.0}  & \textbf{36.5}  & \textbf{19.2}  & \textbf{8.4}  & \textbf{19.4}  & \textbf{26.2}  & \textbf{8.0}  & \textbf{8.4}  & \textbf{17.0}  & \textbf{8.9}  & \textbf{21.4}  & \textbf{13.8}  & \textbf{17.6}   \\ 
        \midrule
        \multirow{8}{*}{FB15K}& BetaE & 9.6  & 15.8  & 18.0  & 9.7  & 9.6  & 11.0  & 25.4  & 3.7  & 7.2  & 9.9  & 4.5  & 5.7  & 0.5  & 1.6  & 14.2  & 4.7  & 9.4   \\ 
        &LogicE & 11.0  & 17.0  & 20.6  & 12.3  & 11.8  & 10.5  & 25.3  & 3.9  & 7.4  & 11.7  & 6.3  & 7.6  & 0.4  & 2.1  & 15.5  & 5.6  & 10.6   \\ 
        &ConE & 10.6  & 18.5  & 21.0  & 11.8  & 12.4  & 16.3  & 26.6  & 6.2  & 9.6  & 13.1  & 6.6  & 8.2  & 0.6  & 0.2  & 16.7  & 6.4  & 11.6   \\ 
        &LMPNN & 7.1  & 17.5  & 13.6  & 9.3  & 7.5  & 17.1  & 19.5  & 7.8  & 17.9  & 17.9  & 8.8  & 8.5  & 6.0  & 14.7  & 13.1  & 11.7  & 12.4   \\ 
        &CQD & 4.2  & 21.3  & 36.1  & 17.4  & 13.7  & 12.3  & 23.6  & 1.6  & 21.1  & 19.1  & 8.9  & 9.6  & 0.9  & 11.9  & 18.4  & 10.4  & 14.4   \\ 
        &FIT & 10.7  & 19.7  & 24.3  & 13.3  & 14.6  & 19.9  & 19.9  & 5.3  & 14.5  & 18.0  & 9.0  & 10.3  & \underline{8.1}  & 9.5  & 17.5  & 10.7  & 14.1   \\ 
        &NS3 (M) & \underline{11.3}  & \underline{28.0}  & \underline{36.0}  & \underline{17.5}  & \underline{20.1}  & \underline{32.9}  & \underline{24.5}  & \underline{11.0}  & \underline{27.0}  & \underline{32.6}  & \underline{12.0}  & \underline{15.0}  & 3.3  & \underline{15.0}  & \underline{24.3}  & \underline{16.6}  & \underline{20.4}   \\ 
        &NS3 (J) & \textbf{40.3}  & \textbf{61.3}  & \textbf{67.2}  & \textbf{61.6}  & \textbf{64.5}  & \textbf{47.2}  & \textbf{65.7}  & \textbf{30.2}  & \textbf{45.0}  & \textbf{48.4}  & \textbf{35.6}  & \textbf{39.7}  & \textbf{17.2}  & \textbf{37.2} & \textbf{58.3}  & \textbf{36.2}  & \textbf{47.2}  \\ 
        \midrule
        \multirow{8}{*}{NELL}&BetaE & 6.6  & 10.0  & 22.2  & 11.8  & 9.8  & 0.6  & 30.6  & 4.4  & 12.0  & 23.9  & 12.5  & 21.3  & 0.4  & 3.0  & 13.1  & 11.1  & 12.1   \\ 
        &LogicE & 6.8  & 10.0  & 21.5  & 16.8  & 11.2  & 0.7  & 31.9  & 5.1  & 13.3  & 25.3  & 17.5  & 28.1  & 1.7  & 2.8  & 14.1  & 13.4  & 13.8   \\ 
        &ConE & 6.8  & 9.4  & 21.5  & 12.6  & 11.1  & 12.0  & 32.4  & 4.8  & 11.6  & 23.2  & 12.1  & 20.9  & 7.7  & 2.6  & 15.1  & 11.8  & 13.5   \\ 
        &LMPNN & 4.4  & 10.7  & 17.6  & 12.1  & 8.9  & 14.6  & 33.3  & 4.6  & 13.9  & 22.1  & 12.3  & 19.6  & 5.7  & 12.2  & 14.5  & 12.9  & 13.7   \\ 
        &CQD & 4.2  & 9.0  & 26.4  & 11.5  & 8.7  & 8.7  & 21.8  & 0.1  & 16.3  & 27.9  & 14.1  & 24.3  & 0.0  & 12.4  & 12.9  & 13.6  & 13.2   \\ 
        &FIT & 8.8  & 11.9  & 24.0  & 17.1  & 11.6  & \underline{12.9}  & \underline{29.1}  & 5.7  & 16.4  & 27.2  & 18.6  & \underline{31.2}  & \underline{7.7}  & \underline{9.5}  & 16.6  & 16.5  & 16.6   \\ 
        &NS3 (M) & \underline{9.5}  & \underline{13.8}  & \underline{30.5}  & \underline{16.9}  & 13.5  & \underline{8.5}  & \underline{24.4}  & \underline{8.9}  &\underline{ 20.1}  & \underline{34.4}  & \underline{19.5}  & 30.9  & 4.8  & 8.4  & \underline{16.7}  & \underline{18.1}  & \underline{17.4}   \\ 
        &NS3 (J) & \textbf{15.8}  & \textbf{32.8}  & \textbf{53.9}  & \textbf{26.9}  & \textbf{21.9}  & \textbf{24.2}  & \textbf{50.2}  & \textbf{14.7}  & \textbf{25.3}  & \textbf{41.0}  & \textbf{21.4}  & \textbf{31.6}  & \textbf{11.7}  & \textbf{21.0}  & \textbf{32.2}  & \textbf{23.8}  & \textbf{28.0}  \\ 
        \bottomrule 

    \end{tabular}

}
\vspace{-1em}
\end{table*}

The $\text{EFO}_k$-CQA benchmark currently supports queries with at most two free variables, and extending it to larger $k$ is challenging due to the exponential growth of valid query types. To systematically examine how query structures evolve with additional free variables, we construct a scalable dataset for evaluating symbolic search. We identify three fundamental dependency patterns among free variables: \emph{disconnected} queries, \emph{chain} queries, and \emph{cyclic} queries, corresponding to independent, sequential, and mutually constrained variables. Together, these patterns span a broad spectrum of free-variable dependencies. We further enrich these structures with \textbf{multi-edge} and \textbf{negative-edge} variants. Starting from seven two-variable query types in the original $\text{EFO}_k$-CQA benchmark, we extend them to support up to three free variables, resulting in 14 query types that capture increasing structural complexity. Figure~\ref{fig:query types} visualizes the constructed query types.

The results of the HIT$@10$ metric on our new dataset are presented in Table~\ref{tab: EFO3 result}, which provide the following observation:
\begin{enumerate}
\item \textbf{The performance of NS3(J) surpasses existing baseline methods on nearly all 14 query types and 3 knowledge graphs.} The impressive results show the effectiveness of symbolic search pruning, which enhances neural symbolic search methods by reducing complexity. In particular, our proposed NS3(J) is, on average, two to three times better than the previous state-of-the-art method, FIT. This highlights the importance of \textbf{direct joint ranking} and validates the effectiveness of our domain reduction strategy. 
\item \textbf{As the number of free variables increases, the performance of most models tends to decline, particularly for embedding-based methods.} In contrast, our marginal ranking results remain consistently stable, likely due to the robustness of our approach in solving the marginal answer set. However, since marginal ranking can only estimate joint ranking, the final results remain suboptimal compared to our direct joint ranking method, as shown in Table~\ref{tab: EFO3 result}.
\end{enumerate}


\begin{figure*}[ht]
    \centering
    \begin{minipage}{0.33\textwidth}
        \centering
        \includegraphics[width=\linewidth]{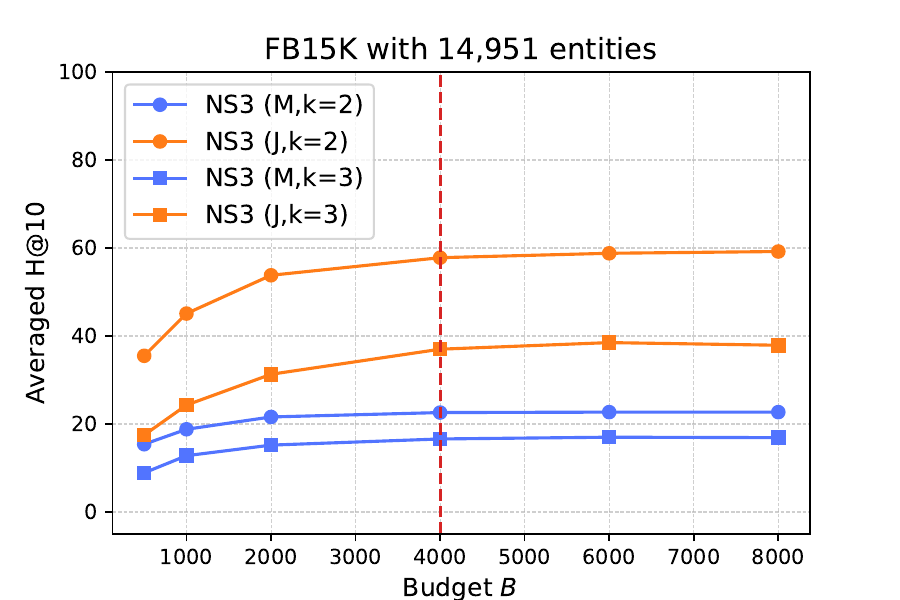}
    \end{minipage}\hfill
    \begin{minipage}{0.33\textwidth}
        \centering
        \includegraphics[width=\linewidth]{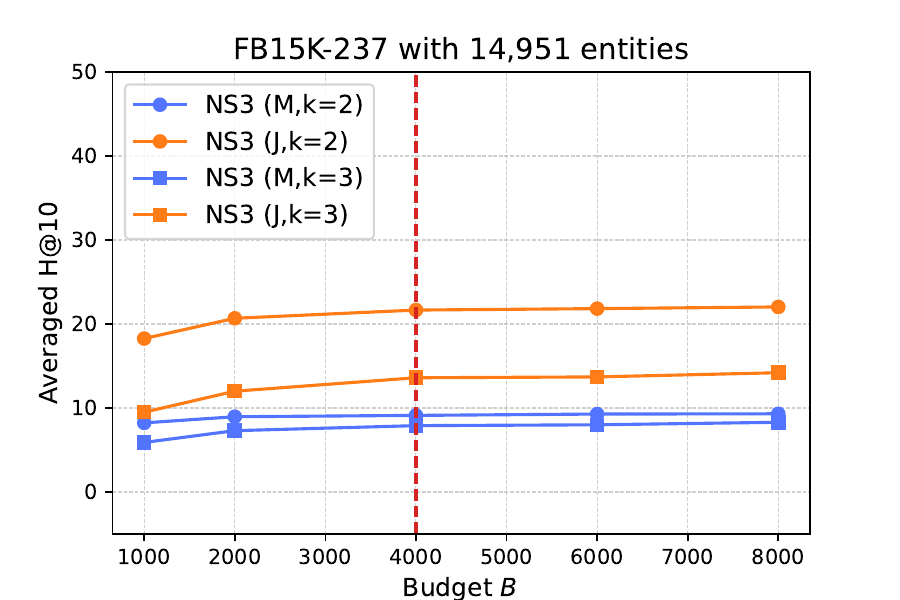}
    \end{minipage}
    \begin{minipage}{0.32\textwidth}
        \centering
        \includegraphics[width=\linewidth]{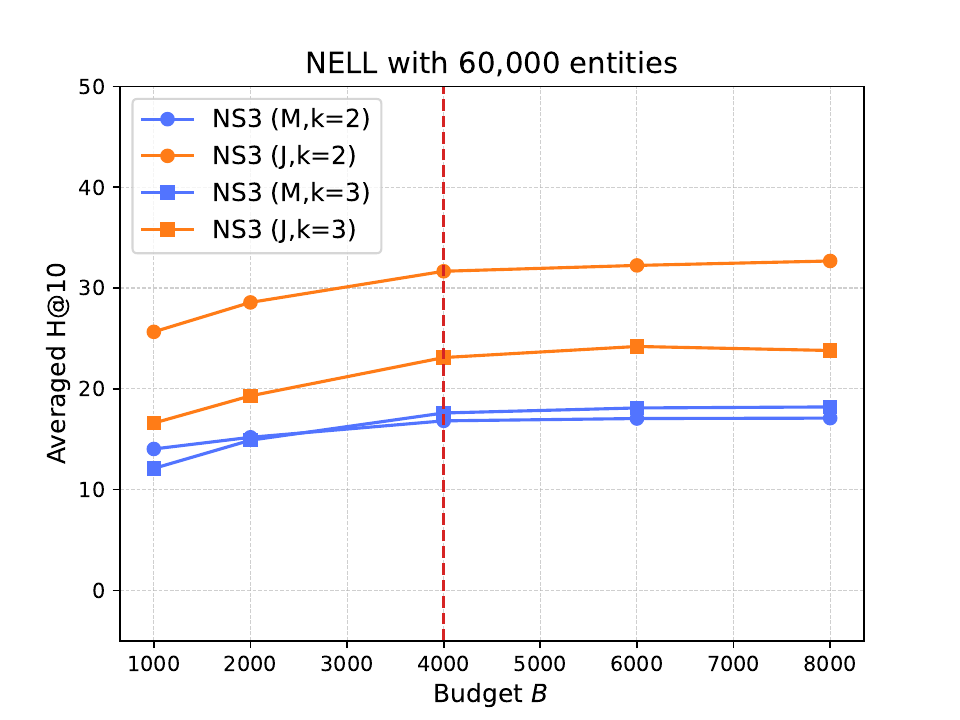}
    \end{minipage}
    \vspace{-1em}

    \caption{The performance of NS3 (J) and NS3 (M) varying with  the domain budget $B$ across three KGs. We provide the averaged HIT@10 scores(\%) for the number of free variables $k$. Particularly, we highlight the domain budget $B$ utilized in Table~\ref{tab: EFO3 result}.}
    \label{fig: ablation on B}
    \vspace{-1em}
\end{figure*}

\subsection{Varying budget Size}~\label{sec:ablation}

Domain budget $B$ is an important hyperparameter because it directly adjusts the trade-off between accuracy and computational complexity. Thus, we conduct the experiments varying $B$ on three KGs, presenting the average scores and total running times on the scalable $\text{EFO}_k$ datasets  in Figure~\ref{fig: ablation on B} and Table~\ref{tab:time_results}, respectively.

\begin{table}[t]
    \centering
    \caption{Total running times of NS3 (J) for inferring 14 query types on scalable $\text{EFO}_k$ datasets, across different datasets and budget levels.}
    \begin{tabular}{ccccc}
        \toprule
        \textbf{Times (hours)} & \textbf{1k} & \textbf{2k} & \textbf{4k} & \textbf{8k} \\
        \midrule
        FB15K-237 & 0.25 & 0.42 & 0.53 & 1.47 \\
        \midrule
        FB15K & 0.5 & 0.5 & 0.85 & 1.97 \\
        \midrule
        NELL & 0.22 & 0.27 & 0.58 & 1.33 \\
        \bottomrule
    \end{tabular}

    \label{tab:time_results}
    \vspace{-1em}
\end{table}

Regarding the total running times of NS3 (J), the growth on three KGs remains stable as the domain budget $B$ increases from 1k to 8k. This demonstrates the stability of the NS3 framework.

Additionally, the average results across various scenarios steadily rise, but the scores gradually begin to level off. This upward trend demonstrates the effectiveness of our budgeting algorithms, which successfully identify valid candidate answers. However, the benefits of further increasing the budget gradually diminish, particularly at the chosen value of $B$ (marked in red). This saturation is consistent with the pruning recall analysis in Appendix~\ref{app:ns3_additional}, where most gold tuples are retained after marginal pruning under the selected budgets. It is worth noting that  our chosen $B$ still constitutes only a small portion of the search space. For instance, the search space for the $\text{EFO}_2$ query from FB15K-237 is actually $(\text{15k})^2$, meaning 8k accounts for only $4\%$. For other KGs or queries with more free variables, this ratio will be even lower, which further highlights the significantly reduced search space.

\subsection{Ablation study of budgeting algorithms}

\begin{table}[ht]
    \centering
    \caption{Ablation study on the utilization of the budgeting algorithm. We report the averaged HIT@10 metrics over queries with $k=3$. }
    \label{tab: ablation budgeting}
    \begin{tabular}{cccc}
    \toprule
        Strategy & FB15K-237 & FB15K & NELL \\ \midrule
        Progressive budget budgeting & 13.72 & 40.22 & 27.34 \\ \midrule
        Simultaneous budget budgeting & 11.58 & 27.56 & 23.96 \\ \bottomrule
    \end{tabular}
    \vspace{-1.5em}

\end{table}
To demonstrate the effectiveness of our progressive budgeting algorithm, we compare it with a strategy that merges all free variables at once. This approach first computes the truth values of marginal queries for each free variable and then allocates the budget simultaneously based on the counts of these marginal queries. We present an ablation study on queries with $k>2$ in Table~\ref{tab: ablation budgeting}, since the two strategies are equivalent when $k=2$.

The experimental results show that our progressive budgeting algorithm outperforms the direct merging strategy across all three KGs, particularly on the FB15K dataset. This highlights the effectiveness of our proposed budgeting algorithm.


\section{Conclusion}


In this paper, we investigate the novel yet challenging problem of answering complex logical queries with $k$ free variables. Building on the proposed \emph{marginal} and \emph{merge} transformations, we enable existing symbolic search methods to directly infer joint rankings for $\text{EFO}_k$ queries for the first time. To address the exponential growth of the joint domain, we introduce a budgeting-based framework that dynamically reduces the joint search space. As a result, complex $\text{EFO}_k$ queries are transformed into equivalent $\text{EFO}_1$ queries and evaluated over the reduced domain. In marginal ranking benchmarks for $\text{EFO}_k$-CQA, our approach to compute marginal rankings through marginal queries—consistently outperforms all baselines in two key metrics. To further validate robustness and scalability, we construct a new scalable dataset with more free variables and evaluate models using joint rankings. The results demonstrate that our method substantially surpasses baseline models. Although our experiments focus on $k \le 3$ due to benchmark availability, NS3 is designed to progressively reduce the joint space, with complexity scaling polynomially in $k$ under a fixed budget. Extending to larger $k$ remains an important direction for future work.

\section{Acknowledgments}
The authors of this paper were supported by the National Key Research and Development Program of China (2025YFE0200500), the ITSP Platform Research Project (ITS/189/23FP) from ITC of Hong Kong, SAR, China, and the AoE (AoE/E-601/24-N), the CRF (No. C6004-25G), the RIF (R6021-20), and the GRF (16205322) from RGC of Hong Kong, SAR, China.
\bibliographystyle{ACM-Reference-Format}
\balance
\bibliography{my_refs}

\appendix

\section{Evaluation and baseline details}~\label{app: baselines}

\subsection{Metrics of marginal ranking}~\label{app: eval}

Complex query answering aims to discover new answers to logical queries over incomplete answers. Consider an  observed knowledge graph $\mathcal{KG}_o$, and a more complete knowledge graph $\mathcal{KG}_c$ with $\mathcal{KG}_o \subset \mathcal{KG}_c$. Given a logical query $\phi$, the easy answers are those that satisfy the query in the observed graph, while hard answers are those that are true in the complete graph but false in the observation graph. In the previous benchmark of EFO1 queries, the evaluation protocol used the ranking-based metrics including MRR and HIT$@k$ to score the ranking of hard answers. When evaluating an EFO$k$ query, the ranked candidates should be the tuples of $k$ entities, with their space represented as $\mathcal{E}^k$. We refer to the ranking of tuple answers over $\mathcal{E}^k$ as joint ranking. The evaluation for multi-variable queries should utilize ranking-based metrics over this joint ranking. 

Due to the limitations of existing CQA models, the current multi-variable benchmark EFO$k$-CQA~\citep{yin25_efok} just infers marginal variables separately and then evaluates the resulted marginal rankings. Given an EFO$k$ query $\phi(y_1, \cdots, y_k)$, the existing CQA models are adapted to compute the marginal rankings of the free variables, where the marginal ranking sorts the entities based on the degree to which they satisfy the constraints of the corresponding free variable. To evaluate the EFO$k$ queries, EFO$k$-CQA proposed three types of metrics that extend ranking-based metrics for EFO1 queries, transitioning from marginal to joint evaluations. Let 
$ \mathbf{a} = \{(a_1,\cdots,a_k)\}$ be the answer set of $\phi(y_1,\cdots,y_k)$, denote $\mathbf{a}_i = M(\mathbf{a}, \{ y_i\})$ be the marginal answer set of $y_i$. We can define the hard marginal answer set $\mathbf{a}^h_i$ and the easy marginal answer set $\mathbf{a}^e_i$. Then, we rank the hard answer $\mathbf{a}^h_i$ against the non-answer $\mathcal{E}-\mathbf{a}^h_i-\mathbf{a}^e_i$ and use the ranking to compute standard metrics like MRR, HIT$@k$ as the marginal results for every free variable. The first \textbf{marginal} metrics directly take the average of the marginal results on all free variables. However, the marginal ranking fails to assess the correspondence among free variables. To address this, \textbf{Multiply} metrics were proposed to evaluate the combinatorial answer based on the marginal ranking. Consider the tuple hard answer $ (a^j_1,\cdots,a^j_k)$, we denote $R_i[a^j_i]$ as the marginal ranking of $a^j_i$ over free variable $f_i$. The \textbf{Multiply} metrics are HITS$@n^k$, where it's 1 if all $a_i$ is ranked in the top $n$ in the corresponding node $y_i$, and $0$ otherwise. The \textbf{Joint} metrics aim to assess the estimated joint rankings by estimating them with the marginal rankings. Given a candidate tuple $ \mathbf{a} = \{(a_1,\cdots,a_k)\}$, we still use  $R_i[a^j_i]$ represent the marginal ranking of $a^j_i$ over free variable $f_i$. We sort the candidate tuples based on their summed ranking $\hat{R}^j = \sum_{i=1}^k r_i(a_i^{j})$ over $\mathcal{E}^k$ to obtain the joint ranking. We expand previous evaluation only supporting two free variables to any positive integer $k$ with combinatorics. The joint ranking of a tuple is represented by the number of non-negative integer solutions, which has a closed-form solution, to the following inequality:
\begin{equation}
    x_1 + \cdots + x_k \le \hat{R}^j.
\end{equation}
Next, we introduce a new variable to simplify the inequality, transforming it into the following linear indeterminate equation:
\begin{align}
        x_1 + \cdots + x_k + x_{k+1} = \hat{R}^j.
\end{align}
Based on combinatorial principles, the number of non-negative integer solutions to this linear indeterminate equation is given by 
\begin{align}
\binom{\hat{R}^j+k}{k} = \frac{\prod_{i=1}^k(\hat{R}^j+i)}{k!}.
\end{align}
Although the joint metrics are designed to evaluate the joint ranking across the entire search space in the EFO$k$-CQA benchmark, these metrics are a compromise for marginal CQA models and provide a closed form based on the marginal rankings.

\subsection{Baseline implementation details}

In this section, we provide implementation details of CQA models that have been evaluated in our paper. 

For query embedding methods that utilize the operator tree, including BetaE~\citep{ren_beta_2020}, LogicE~\citep{luus_logic_2021}, and ConE~\citep{zhang_cone_2021}, we first compute the ordering of nodes in the query graph. Subsequently, we calculate the embedding for each node, with the final embedding of every free node serving as the predicted answer. Notably, the node ordering we derive aligns with the natural topological ordering induced by the directed acyclic operator tree, allowing us to compute embeddings in the same sequence as the original implementation. We then implement each set operation in the operator tree, including intersection, negation, and set projection. By leveraging Disjunctive Normal Form (DNF), we address the union in the final step. Thus, our implementation is consistent with the original dataset~\citep{ren_beta_2020}.

For CQD~\citep{arakelyan_complex_2020} and LMPNN~\citep{wang_logical_2023}, their original implementations do not rely on the operator tree, so we utilize them as is. Specifically, in a query graph with multiple free variables, CQD predicts the answer for each free variable individually, treating the other free variables as existential variables. For LMPNN, we obtain the embeddings of all nodes representing the free variables.

Regarding FIT~\citep{yin_rethinking_2023}, although it is designed to infer $\mathrm{EFO}_1$ queries, it is computationally expensive. Its inference complexity is $O(|\mathcal{E}|^2)$ even for acyclic query graphs, and it fails to remain polynomial for cyclic graphs due to its reliance on explicit enumeration. In our implementation, we therefore restrict FIT to enumerate at most 10 candidate entities per node in the query graph, as exhaustive enumeration for cyclic queries is prohibitively time-consuming. This adjustment has enabled the implementation of FIT on the FB15k-237 dataset~\citep{toutanova_observed_2015}. However, it took 20 hours to evaluate FIT on our EFO$k$-CQA dataset, while other models required no more than two hours. Additionally, for larger knowledge graphs, including NELL~\citep{carlson_toward_2010} and FB15k~\citep{bordes_translating_2013}, we encountered out-of-memory errors on a Tesla V100 GPU with 32GB of memory when implementing FIT. Consequently, we have omitted its results for these two knowledge graphs.

\section{Budgeting and runtime analysis}~\label{app: further exper}

\subsection{Implementation and runtime details}~\label{app: imple}

Our experiments were conducted on a V100 GPU with 32 GB of memory. Evaluating the $\text{EFO}_k$-CQA benchmark on FB15K-237 using FIT requires 20 hours, while our method takes only 5 to 6 hours. Additionally, FIT encounters an out-of-memory error on other knowledge graphs, whereas our method does not. Evaluating FIT on our proposed scalable $\text{EFO}_k$ benchmark on NELL takes 2.8 hours, while our method requires only 0.9 hours.

\subsection{Recall after marginal pruning}

Marginal pruning is a necessary but lossy step: once one component of a gold tuple is excluded from the retained marginal candidates, the tuple cannot be recovered by later symbolic search. We quantify this risk by measuring the fraction of gold tuples whose components are all retained after pruning.

\begin{table}[t]
\centering
\caption{Gold tuple recall after marginal pruning under different budgets.}
\label{tab:pruning_recall}
\resizebox{0.35\textwidth}{!}{
\begin{tabular}{lcccc}
\toprule
Dataset & $B=1000$ & $B=2000$ & $B=4000$ & $B=6000$ \\
\midrule
FB15K-237 & 77.4 & 91.9 & 99.0 & 99.8 \\
FB15K & 76.4 & 91.1 & 98.8 & 99.8 \\
NELL & 75.8 & 86.0 & 93.1 & 95.8 \\
\bottomrule
\end{tabular}
}
\end{table}

As shown in Table~\ref{tab:pruning_recall}, recall exceeds 98\% on FB15K-237 and FB15K at $B=4000$, which explains why performance gains begin to saturate around this budget. NELL has lower recall because its graph is sparser, making cascading pruning errors more frequent.

\subsection{Budget sensitivity and runtime}

The budget controls the trade-off between candidate coverage and inference cost. On NELL, the total inference time increases moderately as $B$ grows, while the performance improvement becomes smaller after the recall is close to saturation.

\begin{table}[t]
\centering
\caption{Runtime of NS3 on NELL under different budgets.}
\label{tab:nell_budget_runtime}
\resizebox{0.35\textwidth}{!}{
\begin{tabular}{lccccc}
\toprule
Budget & 1000 & 2000 & 4000 & 6000 & 8000 \\
\midrule
Time (min) & 12 & 15 & 28 & 40 & 44 \\
\bottomrule
\end{tabular}
}
\end{table}

Together with Figure~\ref{fig: ablation on B}, this suggests that $B=4000$ is a practical operating point on FB15K-237 and FB15K: it retains almost all gold tuples while avoiding unnecessary expansion of the joint domain.

\subsection{Direct runtime comparison}

Table~\ref{tab:runtime_baseline} reports end-to-end evaluation time under the same hardware. ConE is faster because it only performs embedding-based marginal inference. FIT is the main symbolic-search baseline capable of joint reasoning, and NS3 is 2--4$\times$ faster than FIT while avoiding its out-of-memory issue on larger KGs.

\begin{table}[t]
\centering
\caption{End-to-end runtime comparison in minutes.}
\label{tab:runtime_baseline}
\resizebox{0.25\textwidth}{!}{
\begin{tabular}{lccc}
\toprule
Dataset & FIT & NS3 & ConE \\
\midrule
FB15K-237 & 38 & 16 & 1.8 \\
FB15K & 22 & 17 & 2.1 \\
NELL & 164 & 40 & 2.5 \\
\bottomrule
\end{tabular}
}
\end{table}

\subsection{Ablation study of budget scheduling algorithms}

\begin{table}[t]
    \centering
    \caption{Ablation study on the utilization of the budget scheduling algorithm. We report the averaged HIT@10 metrics over queries with $k=3$. }
    \label{tab: ablation scheduling}
    \begin{tabular}{cccc}
    \toprule
        Method & FB15K-237 & FB15K & NELL \\ \midrule
        With scheduling algorithm & 13.72 & 40.22 & 27.34 \\ \midrule
        Without scheduling algorithm & 11.58 & 27.56 & 23.96 \\ \bottomrule
    \end{tabular}
 \vspace{-1em}

\end{table}

We conduct an ablation study in Table~\ref{tab: ablation scheduling} to demonstrate the effectiveness of our proposed budget scheduling algorithm. We compare the budget scheduling algorithm to a method that directly merges all the free variable nodes based on their marginal answer sets. We present the averaged results for queries with $k>2$, as the two methods yield equivalent results for queries with $k=2$. The findings indicate that our budget scheduling algorithm significantly improves performance compared to direct merging, particularly on the FB15K dataset. This underscores the effectiveness of our proposed scheduling algorithm.

\subsection{Further results on FB15K}

We present the benchmark results for FB15K in Table~\ref{tab: FB15k EFO2 result}. Notably, FIT encounters out-of-memory issues due to the large knowledge graph. Our proposed NS3 (M) achieves the best average results across three different metrics.

\begin{table}[h]
\centering
\caption{HIT@10 scores(\%) of three different types for answering queries with two free variables on FB15k. The constant number is fixed to be two. The notation of $e$, SDAG, Multi, and Cyclic is the same as Table~\ref{tab: EFO2 result}.}
\label{tab: FB15k EFO2 result}
\resizebox{\linewidth}{!}{
\begin{tabular}{ccrrrrrrrrr}
\toprule
\multirow{2}{*}{Model}  & \multirow{2}{*}{\shortstack[c]{HIT@10\\ Type}}  & \multicolumn{2}{c}{$e=0$} & \multicolumn{3}{c}{$e=1$} & \multicolumn{3}{c}{$e=2$} & \multirow{2}{*}{AVG.} \\
\cmidrule(lr){3-4} \cmidrule(lr){5-7} \cmidrule(lr){8-10} & & SDAG      & Multi &  SDAG  & Multi & Cyclic &  SDAG  & Multi & Cyclic & \\
\midrule
\multirow{3}{*}{BetaE} 
& Marginal&76.9&77.2&68.9&69.3&75.1&55.0&57.4&73.6&63.6 \\
& Multiply&41.7&41.6&31.7&31.0&38.7&25.2&25.9&36.1&29.7\\
& Joint&11.6&13.7&8.7&8.6&17.8&4.9&5.4&14.3&8.4\\
\midrule
\multirow{3}{*}{LogicE} 
& Marginal&82.9&80.9&73.6&72.9&76.6&58.9&60.7&75.7&66.9 \\
& Multiply&47.5&45.0&36.3&34.1&40.4&28.5&29.0&38.0&32.7\\
& Joint&12.7&13.9&10.0&9.9&19.2&6.1&6.5&15.9&9.6 \\
\midrule
\multirow{3}{*}{ConE} 
& Marginal&84.1&84.8&76.5&76.3&81.4&61.8&63.8&79.7&70.2 \\
& Multiply&48.7&48.1&37.7&35.9&44.2&29.9&30.4&41.4&34.6\\
& Joint&14.2&15.6&10.3&10.4&20.6&6.2&6.6&16.9&10.1 \\
\midrule
\multirow{3}{*}{CQD} 
& Marginal&73.8&76.8&69.0&71.9&76.3&51.1&54.4&77.0&62.9 \\
& Multiply  &45.0&46.6&37.4&36.9&43.9&28.1&29.2&41.9&34.0\\
& Joint   &17.1&19.0&13.1&13.0&20.6&7.7&8.6&18.1&11.9  \\
\midrule
\multirow{3}{*}{LMPNN} 
& Marginal&89.2&80.1&80.3&78.2&84.2&65.6&63.7&80.2&71.3 \\
& Multiply   &56.6&50.5&45.7&42.4&49.0&37.6&34.8&44.6&39.7\\
& Joint   &18.9&17.2&12.9&12.4&22.4&8.0&7.5&16.9&11.2  \\
\midrule

\multirow{3}{*}{NS3 (M)} 
& Marginal&82.0&83.6&81.6&83.4&55.5&76.7&78.9&82.2&79.1 \\
& Multiply   &81.8&83.1&81.1&82.7&44.4&78.0&79.3&46.6&78.5\\
& Joint   &19.9&24.2&21.1&23.5&19.7&13.0&14.3&18.5&18.5  \\
\bottomrule
\end{tabular}
}
\end{table}

\section{Comparison with multi-modal $\mathrm{EFO}_1$ methods}\label{app:ns3_additional}

Recent multi-modal query embedding methods, such as Query2Particles and Query2GMM, improve $\mathrm{EFO}_1$ reasoning by representing the answer set of a single free variable with multiple particles or mixture components. NS3 addresses a different problem: $\mathrm{EFO}_k$ joint ranking, where the output is a ranked list of entity tuples coupled by cross-variable constraints. Applying a multi-modal $\mathrm{EFO}_1$ method independently to each free variable still produces marginal rankings and cannot determine tuple compatibility. In contrast, NS3 uses marginalized queries only to prune candidates, then merges free variables into hypernodes and performs symbolic search over joint assignments. Therefore, NS3 is complementary to stronger $\mathrm{EFO}_1$ solvers and can use them as backbones.

\section{Proof of the commutativity of marginalization}~\label{app: proof}

\begin{proposition}[Commutativity of marginalization]
    Given a query $\phi$ and a set of target free variables $\mathcal{Y}^t$, 
    \begin{equation}
         \mathcal{A}[\mathcal{M}[\phi(\mathcal{Y}), \mathcal{Y}^t ]] = \mathcal{M}[\mathcal{A}[\phi(\mathcal{Y})], \mathcal{Y}^t ].
    \end{equation}
\end{proposition}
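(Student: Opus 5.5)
We prove the set equality by double inclusion, working directly from the definitions of the answer set, of $\mathcal{M}[\phi,\mathcal{Y}^{\delta}]$ on queries, and of $\mathcal{M}[\mathbf{a},\mathcal{Y}^{\gamma}]$ on answer sets.

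\textbf{Setup.} Write $\mathcal{Y}=\{y_1,\ldots,y_k\}$ and $\mathcal{Y}^t=\{y_{i_1},\ldots,y_{i_\tau}\}$, and let $\mathcal{Y}^{\bar t}=\mathcal{Y}\setminus\mathcal{Y}^t$ be the free variables to be quantified away. By definition, $\mathcal{M}[\phi(\mathcal{Y}),\mathcal{Y}^t]$ is the formula
\begin{equation*}
\exists\, \mathcal{Y}^{\bar t}.\ \phi(y_1,\ldots,y_k).
\end{equation*}
The original existential variables $x_1,\ldots,x_t$ stay quantified inside $\phi$. Because existential quantifiers commute and distribute over the DNF disjuncts, this is again an $\mathrm{EFO}_\tau$ formula, with existential block $\{x_1,\ldots,x_t\}\cup\mathcal{Y}^{\bar t}$. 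Its answer set is therefore well defined as a set of $\tau$-tuples.

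\textbf{Inclusion $\supseteq$.} Take $(a_{i_1},\ldots,a_{i_\tau})\in\mathcal{M}[\mathcal{A}[\phi],\mathcal{Y}^t]$. By the definition of marginalization on answer sets, it is the projection of some tuple $(a_1,\ldots,a_k)$ with $\phi(a_1,\ldots,a_k)$ true in $\mathcal{KG}$. Using the entities $a_j$, $y_j\in\mathcal{Y}^{\bar t}$, as witnesses for the new existential variables shows that $\exists\,\mathcal{Y}^{\bar t}.\,\phi$ holds at $(a_{i_1},\ldots,a_{i_\tau})$. Hence the tuple lies in $\mathcal{A}[\mathcal{M}[\phi,\mathcal{Y}^t]]$.

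\textbf{Inclusion $\subseteq$.} Take $(a_{i_1},\ldots,a_{i_\tau})$ with $\exists\,\mathcal{Y}^{\bar t}.\,\phi$ true in $\mathcal{KG}$. By Tarskian semantics of $\exists$ over the domain $\mathcal{E}$, there exist entities $a_j\in\mathcal{E}$ for $y_j\in\mathcal{Y}^{\bar t}$ such that $\phi(a_1,\ldots,a_k)$ is true. Then $(a_1,\ldots,a_k)\in\mathcal{A}[\phi]$, and its projection onto $\mathcal{Y}^t$ is exactly the given tuple.

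\textbf{Main point of care.} The only delicate step is the bookkeeping in the setup. The projection must respect the index order $i_1<\cdots<i_\tau$ used in the definition of $\mathcal{M}[\mathbf{a},\cdot]$. Quantifying $\mathcal{Y}^{\bar t}$ must not capture or clash with the existing variables $x_1,\ldots,x_t$; renaming bound variables apart handles this. The result should also be checked to be in the paper's $\mathrm{EFO}$ form: $\exists$ distributes over $\lor$, so the prenex DNF shape is preserved.

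Both sides are evaluated in the same fixed $\mathcal{KG}$, so the argument does not depend on whether $\mathcal{KG}$ is the observed or the complete graph. The proposition therefore holds for easy and hard answers alike.
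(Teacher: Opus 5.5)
Your proof is correct and takes the same route as the paper's: a double inclusion in which the projected-away coordinates of a full answer serve as witnesses for the newly existential variables, and conversely the existential witnesses of a marginal answer extend it to a full answer whose projection recovers it. Your extra bookkeeping (index order, renaming bound variables apart, and checking that the marginalized formula stays in prenex DNF $\mathrm{EFO}$ form) is a harmless refinement of details the paper leaves implicit.
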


\begin{proof}
Let $\hat{\mathbf{a}} \in M[\mathcal{A}[\phi(\mathcal{Y})], \mathcal{Y}^m]$. Then, there exists an answer $\mathbf{a} \in \mathcal{A}[\phi(\mathcal{Y})]$ such that $\hat{\mathbf{a}} = M[\mathbf{a}, \mathcal{Y}^m]$. Since $\mathbf{a}$ is a valid assignment for the query $\phi$, we can treat the entities in $\mathcal{Y} \setminus \mathcal{Y}^m$ as assignments for existential variables. This implies that $M[\mathbf{a}, \mathcal{Y}^m]$ is the result of the marginal query $M[\phi, \mathcal{Y}^m]$. Thus, we have \( M[\mathcal{A}[\phi(\mathcal{Y})], \mathcal{Y}^m] \subset \mathcal{A}[M[\phi(\mathcal{Y}), \mathcal{Y}^m]] \).

Now, consider $\tilde{\mathbf{a}} \in \mathcal{A}[M[\phi(\mathcal{Y}), \mathcal{Y}^m]]$. For each existential variable transformed from the free variable set $\mathcal{Y} \setminus \mathcal{Y}^m$, we can find at least one corresponding assignment for the given answer $\tilde{\mathbf{a}}$ of $M[\phi(\mathcal{Y}), \mathcal{Y}^m]$. We can then extend $\tilde{\mathbf{a}}$ to $\mathbf{a}$ by assigning values to the existential variables, where $\mathbf{a} \in \mathcal{A}[\phi(\mathcal{Y})]$. Thus, we conclude that $\tilde{\mathbf{a}} \in M[\mathbf{a}, \mathcal{Y}^m]$ and $\tilde{\mathbf{a}} \in M[\mathcal{A}[\phi(\mathcal{Y})], \mathcal{Y}^m]$. Therefore, it follows that \( \mathcal{A}[M[\phi(\mathcal{Y}), \mathcal{Y}^m]] \subset M[\mathcal{A}[\phi(\mathcal{Y})], \mathcal{Y}^m] \).

We conclude the proof.

\end{proof}

\end{document}